\newcommand{\vu}{\bm{u}}
\newcommand{\vx}{\bm{x}}
\newcommand{\ds}{\displaystyle}
\begin{document}

\title{Assessing Group Fairness with\\
	Social Welfare Optimization}
%
%
\author{Violet (Xinying) Chen\inst{1}
\and J.\ N. Hooker\inst{2} 
\and Derek Leben\inst{2}
}
\authorrunning{V. Chen et al.}
%
\institute{Stevens Institute of Technology
\email{vchen3@stevens.edu}
\and
Carnegie Mellon University
	\email{jh38@andrew.cmu.edu,dleben@andrew.cmu.edu}
}
\maketitle              
\begin{abstract}
Statistical parity metrics have been widely studied and endorsed in the AI community as a means of achieving fairness, but they suffer from at least two weaknesses.  They disregard the actual welfare consequences of decisions and may therefore fail to achieve the kind of fairness that is desired for disadvantaged groups.  In addition, they are often incompatible with each other, and there is no convincing justification for selecting one rather than another.  This paper explores whether a broader conception of social justice, based on optimizing a social welfare function (SWF), can be useful for assessing various definitions of parity.  We focus on the well-known alpha fairness SWF, which has been defended by axiomatic and bargaining arguments over a period of 70 years.  We analyze the optimal solution and show that it can justify demographic parity or equalized odds under certain conditions, but frequently requires a departure from these types of parity.  In addition, we find that predictive rate parity is of limited usefulness.  These results suggest that optimization theory can shed light on the intensely discussed question of how to achieve group fairness in AI.

\keywords{Social welfare optimization  \and group parity in AI }
\end{abstract}

\section{Introduction}

There is growing demand within industry and government for assurance that machine learning (ML) models respect and promote equality of impact across protected groups \cite{Fjeld,Jobin} and comply with legal requirements \cite{Feldman,Selbst}.  This concern arises in contexts that range from hiring and parole decisions to mortgage lending and credit ratings.  One prominent method of satisfying these ethical and legal goals is the use of statistical parity metrics \cite{barocas}. 
For example, one might assess two groups have equal approval rates (\emph{demographic parity}), whether the approval and rejection rates of qualified candidates are equal ({\em equalized odds}), or whether the fraction of qualified candidates among those approved is the same (\emph{predictive rate parity}).

There are at least two problems, however, with reliance on statistical parity as a measure of fairness.  One is that parity metrics take no account of the actual utility consequences of being selected or rejected.  Presumably, group disparities are viewed as unjust because different groups derive unequal benefits from the selection process.  Yet an assessment of these benefits requires consideration of the actual welfare outcomes of selecting or rejecting individuals.  For example, rejecting a member of a disadvantaged group may have greater negative consequences than rejecting a member of an advantaged group.  The standard parity metrics take account only of the number of individuals selected or rejected, not the impacts of these decisions.

A second problem is that parity metrics are frequently incompatible with each other \cite{chouldechova2017fair,Friedler,Kleinberg} and, in particular, imply different trade-offs between fairness and accuracy \cite{Bertsimas,Kleinberg}.  As a result, there is often no consensus on which metric is appropriate in a given context.  This is illustrated by the famous debate over parole decisions between ProPublica and Northpointe (now Equivant) regarding whether the latter's COMPAS product is fair, with one side claiming that the model is unfair because it fails to achieve equalize odds, and the other side claiming it is fair because it achieves predictive rate parity \cite{Propublica2016,DieMenBre16}.  Lacking any further grounds for settling this dispute, the debate has (for now) reached a stalemate.  Ideally, one would justify (or reject) a parity metric by appealing to a broader principle of justice.  

In this paper, we explore an approach for evaluating group parity metrics via their effects on the welfare of individuals in each group. The aim is to connect the debate about group parity with the rich tradition of welfare economics, where policies are evaluated by their effects on social welfare, as measured by a social welfare function (SWF).  Such a function can take into account the distribution of utilities as well as overall welfare.  We ask whether a selection policy that optimizes social welfare, as measured by a SWF, results in some particular form of group parity or requires departure from the standard parity measures.  Our underlying hypothesis is that insights obtained from optimization theory can shed light on the vexing problem of fairness in AI.  

	
As a first step in this research program, we investigate the parity implications of {\em alpha fairness} \cite{MoWal00,VerAyeBor10}, a well-known family of SWFs parameterized by a nonnegative real number $\alpha$.  Larger values of $\alpha$ indicate a stronger emphasis on fairness as opposed to maximizing total utility, the latter corresponding to $\alpha=0$.  Alpha fairness can therefore evaluate the trade-off of fairness and accuracy, a perennial issue in machine learning.  Other special cases include the maximin (Rawlsian) criterion ($\alpha=\infty$) and {\em proportional fairness}, also known as the Nash bargaining solution ($\alpha=1$). We ask what are the parity implications of a given level of fairness as indicated by $\alpha$.

Our purpose here is not to defend alpha fairness as a fairness criterion, but to explore the implications of a criterion that has {\em already} been extensively defended.  Alpha fairness in its various forms has been studied for over 70 years by investigators that include two Nobel laureates (John Nash and J.\ C.\ Harsanyi).  Nash \cite{nash1950bargaining} gave an axiomatic argument for his bargaining solution in 1950, while Rubinstein, Harsanyi and Binmore \cite{rubinstein1982perfect,harsanyi1986rational,binmore1986nash} supplied bargaining arguments.  Lan et.\ al \cite{LanChi11,LanKaoChiSab10} provided an axiomatic derivation for general alpha fairness and proposed an interpretation of the $\alpha$ parameter.  Bertsimas et al.\ \cite{Bertsimas} studied resulting equity/efficiency trade-offs.  Alpha fairness has also seen a number of practical applications, particularly in telecommunications and other engineering fields \cite{Kelly1998,Mazumdar1991,MoWal00,OgrLusPioNacTom14,VerAyeBor10}.


After a brief survey of related work, we first establish a general solution to the problem of maximizing alpha fairness subject to a constraint on the number of individuals selected.  We then present a utility model that allows us to relate group characteristics to the implications of alpha fairness.  Following this, we describe specific implications for demographic parity, equalized odds, and predictive rate parity, and draw conclusions from these results.  
  

\section{Related Work} \label{sec:related}

Statistical group parity metrics are the most widely studied approach to fairness in AI and machine learning.  Much of this research is surveyed in \cite{CasCruGreRegPenCos22,mehrabi2021survey}.  However, a welfarist approach is beginning to receive recognition in AI fairness literature, e.g. \cite{Hu2019,hu2020fair,CorbettDavies2018TheMA,loi2019philosophical,Card,ChenHooker22,CheHoo20,Baumann}.  One motivation is pragmatic: social welfare can provide a ``common currency'' with which one can justify the choice of parity metric, when the typical justifications are incommensurate \cite{greene}. For example, arguments for individual fairness appeal to procedural justice concerns, while arguments for group fairness appeal to distributive justice \cite{Binns,Leben}.  When a model cannot satisfy both of these values, it is necessary to justify one's choice. 
Another motivation for a welfarist approach is ethical:
one may wish to strive for group parity to make disadvantaged groups better off, rather than to achieve equality for its own sake \cite{Carter,Moss}.

Social welfare functions have been used in optimization models for some time, as surveyed in \cite{Ogryczak2008,Karsu2015,ChenHooker22}.  Aside from alpha fairness, SWFs that balance equity and efficiency include Kalai-Smorodinsky bargaining \cite{KalSmo75} and threshold functions \cite{WilliamsCookson2000,HooWil12,CheHoo21}.  

Despite the large literature on SWFs and group parity metrics, we describe here what is, to our knowledge, the first explicit connection between them. 

\section{The Basic Model} \label{sec:prob-setting}

We address the task of selecting individuals from a population to receive a benefit or resource, such as a mortgage loan or a job interview.  Some individuals belong to a protected group that is disadvantaged with respect to qualification status. We define binary variables $D, Y, Z$ to indicate whether an individual is selected ($D=1$), qualified ($Y=1$), or protected ($Z=1$). 
To simplify notation, we use $D$ to represent $D=1$ and $\neg D$ to represent $D=0$, and similarly for $Y$ and $Z$.  

We have demographic parity when $P(D|Z)=P(D|\neg Z)$, equalized odds (in the positive sense of equality of opportunity) when $P(D|Y,Z)=P(D|Y,\neg Z)$, and predictive rate parity when $P(Y|D,Z)=P(Y|D,\neg Z)$.  We interpret the conditional probability $P(D|Z)$ as the fraction of protected individuals who are selected, and similarly for the other probabilities.  The latter two types of parity are typically defined in terms of qualifications that are determined after the fact, such as whether a mortgage recipient repaid the loan, a job interviewee was hired, or a parolee committed no further crimes.  In addition, calculation of the odds ratio requires knowledge of how many rejected candidates are qualified.  

To assess utilitarian outcomes, we suppose that an individual $i$ experiences expected utility $u_i=a_i+b_i$ if selected, and a baseline utility $u_i=b_i$ if rejected.  We refer to $a_i$ as the {\em selection benefit}.  It can be negative (indicating that selection is harmful), but we assume that $b_i>0$ and $a_i+b_i>0$ because alpha fairness is not defined for nonpositive utilities.  This assumption can be met by a positive translation of the utility scale if necesssary.  

We assess the desirability of a utility distribution $\vu=(u_1,\ldots,u_n)$ with the alpha fairness social welfare function, given by 
\begin{equation}
W_{\alpha}(\vu) = \left\{
\begin{array}{ll}
{\ds
	\frac{1}{1-\alpha}\sum_i u_i^{1-\alpha}, 
} & \mbox{if}\; \alpha\geq 0 \;\mbox{and}\; \alpha\neq 1 \\
{\ds 
	\sum_i \log(u_i), 
} & \mbox{if} \; \alpha=1
\end{array}
\right.
\label{eq:alpha}
\end{equation}
Alpha fairness is achieved by maximizing $W_{\alpha}(\vu)$ subject to a limit on the number of individuals that can be selected.  

We let binary variable $x_i=1$ when individual $i$ is selected.  The expected utility gained by individual $i$ is therefore $a_ix_i+b_i$.  The social welfare resulting from a given vector $\vx=(x_1,\ldots,x_n)$ of selection decisions, as measured by the alpha fairness SWF, is 
\begin{equation}
W_{\alpha}(\vx) = 
\left\{
\begin{array}{ll}
{\ds
\frac{1}{1-\alpha} \sum_{i=1}^n (a_ix_i + b_i)^{1-\alpha},
} & \mbox{if}\; \alpha\geq 0\;\mbox{and}\;\alpha\neq 1
\\
{\ds
\sum_{i=1}^n \log(a_ix + b_i),
} & \mbox{if}\;\alpha=1
\end{array}
\right.
\label{eq:swf1}
\end{equation}
If $m\;(< n)$ individuals are to be selected, one achieves alpha fairness for a given $\alpha$ by maximizing $W_{\alpha}(\vx)$ subject to 
$\sum_{i=1}^n x_i = m$.  
A maximizing vector $\vx$ can be deduced using a simple greedy algorithm.  We first consider the case $\alpha\neq 1$.  The top expression in \eqref{eq:swf1} can be written as
\begin{equation}
\frac{1}{1-\alpha} \sum_{i=1}^n b_i^{1-\alpha} +
\frac{1}{1-\alpha} \sum_{i=1}^n \Big( (a_ix_i+b_i)^{1-\alpha} - b_i^{1-\alpha} \Big)
\label{eq:swf4}
\end{equation}
Since the first term is a constant, we can maximize \eqref{eq:swf4} by maximizing its second term, which can be written as
\begin{equation}
\frac{1}{1-\alpha} \sum_{i|x_i=1} \hspace{-1ex} \Big( (a_i+b_i)^{1-\alpha} - b_i^{1-\alpha} \Big)
=
\sum_{i|x_i=1} \hspace{-1.1ex} \Delta_i(\alpha)
\label{eq:swf2}
\end{equation}
where we define
\[
\Delta_i(\alpha) = 
\left\{
\begin{array}{ll}
\frac{1}{1-\alpha}{\ds \big((a_i+b_i)^{1-\alpha}
- b_i^{1-\alpha}\big)}, & \mbox{if} \; \alpha\geq 0, \; \alpha\neq 1 \\[1ex]
\log(a_i+b_i)-\log(b_i), & \mbox{if}\;\alpha=1
\end{array}
\right.
\]
The term $\Delta_i(\alpha)$ is the increase in welfare that results from selecting individual $i$ (for a given $\alpha\neq 1$).  We can maximize \eqref{eq:swf2} subject to $\sum_{i=1}^n x_i = m$ by selecting the $m$ individuals with the largest {\em welfare differential} $\Delta_i(\alpha)$.  A similar argument applies for $\alpha=1$.  Thus we have

\begin{theorem} \label{th:1}
	If $\Delta_{\pi_1}(\alpha)\geq \cdots\geq\Delta_{\pi_n}(\alpha)$, where $\pi_1,\ldots,\pi_n$ is a permutation of $1,\ldots,n$, then one can maximize $W_{\alpha}(\vx)$ subject to $\sum_{i=1}^n x_i = m$ by setting $x_i=1$ for $i=\pi_1,\ldots,\pi_m$, and $x_i=0$ for $i=\pi_{m+1},\ldots,\pi_n$.
\end{theorem}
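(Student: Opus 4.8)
The plan is to exploit the additive separability of the alpha fairness objective across individuals, which the derivation preceding the theorem has already set up. First I would treat the case $\alpha\neq 1$. Following the manipulation displayed in \eqref{eq:swf4}--\eqref{eq:swf2}, and using that $x_i\in\{0,1\}$ gives the identity $(a_ix_i+b_i)^{1-\alpha} = b_i^{1-\alpha} + x_i\big((a_i+b_i)^{1-\alpha}-b_i^{1-\alpha}\big)$, I would write
\[
W_{\alpha}(\vx) \;=\; \frac{1}{1-\alpha}\sum_{i=1}^n b_i^{1-\alpha} \;+\; \sum_{i\mid x_i=1}\Delta_i(\alpha),
\]
where the first term is independent of $\vx$. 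Hence maximizing $W_{\alpha}(\vx)$ over $\vx\in\{0,1\}^n$ with $\sum_i x_i=m$ is equivalent to choosing a subset $S\subseteq\{1,\dots,n\}$ with $|S|=m$ maximizing $\sum_{i\in S}\Delta_i(\alpha)$.

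Next I would show that $S=\{\pi_1,\dots,\pi_m\}$, the indices of the $m$ largest welfare differentials, attains this maximum, via a standard exchange argument. Given any feasible $S$, suppose $\pi_j\notin S$ for some $j\leq m$ while $\pi_k\in S$ for some $k>m$; then replacing $\pi_k$ by $\pi_j$ in $S$ preserves $|S|=m$ and changes the objective by $\Delta_{\pi_j}(\alpha)-\Delta_{\pi_k}(\alpha)\geq 0$, since $j\leq m<k$ and the $\Delta_{\pi_\ell}(\alpha)$ are sorted in nonincreasing order. Iterating this exchange transforms any feasible $S$ into $\{\pi_1,\dots,\pi_m\}$ without ever decreasing the objective, so that subset is optimal, and the associated $\vx$ is exactly the one described in the theorem.

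Finally, I would observe that the case $\alpha=1$ is handled identically, replacing the power identity above with $\log(a_ix_i+b_i)=\log(b_i)+x_i\big(\log(a_i+b_i)-\log(b_i)\big)$, valid for $x_i\in\{0,1\}$, so that $W_1(\vx)=\sum_{i=1}^n\log(b_i)+\sum_{i\mid x_i=1}\Delta_i(1)$, and applying the same exchange argument.

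I do not expect a genuine obstacle here; the one point requiring care is the constant‑separation identity, which depends essentially on $x_i$ being $0$–$1$ (it would fail for fractional $x_i$), and phrasing the exchange argument so that it also covers ties among the $\Delta_i(\alpha)$ — which it does, since only the inequality $\Delta_{\pi_j}(\alpha)\geq\Delta_{\pi_k}(\alpha)$ is used and the theorem asserts existence of a maximizer rather than uniqueness.
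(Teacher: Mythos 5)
Your proposal is correct and follows essentially the same route as the paper: the same separation of $W_{\alpha}(\vx)$ into a constant plus $\sum_{i\mid x_i=1}\Delta_i(\alpha)$ as in \eqref{eq:swf4}--\eqref{eq:swf2}, followed by the observation that the constrained sum is maximized by taking the $m$ largest differentials. The only difference is that you spell out the exchange argument and the $\alpha=1$ case explicitly, where the paper simply asserts these steps as immediate.
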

At this point we can easily check whether achieving alpha fairness results in the various forms of group parity by observing whether their definitions are satisfied when individuals $\pi_1,\ldots,\pi_m$ are selected.

\section{Modeling Protected and Nonprotected Groups}

While Theorem~\ref{th:1} specifies an alpha fair selection policy for any given set of individual utility parameters $(a_1,b_1),\ldots(a_n,b_n)$, it yields limited insight into how the utility characteristics of protected and nonprotected groups affect alpha fair selections.   In addition, the large number of parameters makes relationships difficult to analyze in a comprehensible fashion.  

We address these issues by supposing that the expected utilities in the two groups occur on sliding scale.  Specifically, we suppose that the selection benefits $a_i$ in the nonprotected group are distributed uniformly on a scale from a maximum $A_{\max}$ down to a minimum $A_{\min} (< A_{\max})$, and selection benefits in the protected group vary uniformly from $a_{\max}$ down to $a_{\min} (< a_{\max})$.  A nonuniform distribution is more realistic, but it requires a complicated analysis that is harder to interpret, while yielding basically the same qualitative results.  To further simplify analysis, we suppose that the base utility has the same value $B$ for all nonprotected individuals, and the same value $b$ for all protected individuals.  We assume that $B>b$ and, consistent with the previous section,   that $A_{\min}+B>0$ and $a_{\min}+b>0$.  Finally, we suppose that the protected group comprises a fraction $\beta$ of the population, with $0<\beta<1$.

We further assume that individuals within a given group are selected in decreasing order of their selection benefit.  Thus if a fraction $S$ of nonprotected individuals are selected, the last individual selected in that group has the selection benefit $A(S)=(1-S)A_{\max}+ S A_{\min}$ and a social welfare differential of 
\[
\Delta_S(\alpha) = 
\left\{
\begin{array}{ll}
\frac{1}{1-\alpha} \Big( \big(A(S) + B\big)^{1-\alpha} - B^{1-\alpha} \Big), & \mbox{if}\;\alpha\geq 0, \;\alpha\neq 1  \\[1.5ex]
\log\big(A(S)+B\big) - \log(B), & \mbox{if} \; \alpha=1 
\end{array}\right.
\]
Similarly, if a fraction $s$ of individuals are selected in the protected group, the last individual selected has the selection benefit $a(s) = (1-s)a_{\max} + s a_{\min}$
and the social welfare differential
\[
\Delta'_s(\alpha) = 
\left\{
\begin{array}{ll}
\frac{1}{1-\alpha} \Big( \big(a(s) + b\big)^{1-\alpha} - b^{1-\alpha} \Big), & \mbox{if}\;\alpha\geq 0, \;\alpha\neq 1  \\[1.5ex]
\log\big(a(s)+b\big) - \log(b), & \mbox{if} \; \alpha=1 
\end{array}\right.
\]
We will suppose that the population is large enough that $S$ and $s$ can be treated as continuous variables.  This simplifies the analysis considerably without materially affecting the conclusions.

Since the social welfare differential is a monotone increasing function of the selection benefit, selecting individuals in order of decreasing welfare differential is, within each group, the same as selecting in order of decreasing selection benefit.  By Theorem~\ref{th:1}, selection in order of decreasing welfare differential maximizes the alpha fairness SWF subject to $\sum_i x_i=m$ if we select individuals until the desired fraction $\sigma=m/n$ of the population is selected.  This occurs when 
\begin{equation}
(1-\beta)S + \beta s = \sigma, \;\;\mbox{or} \;\;
s = s(S) = \frac{\sigma - (1-\beta)S}{\beta}
\label{eq:alpha29}
\end{equation}

We first take note of the ranges within which $S$ and $s$ can vary, subject to \eqref{eq:alpha29}.  Since we must have $0\leq S\leq 1$ and $0\leq s\leq 1$, $S$ can vary in the range from $S_{\min}$ to $S_{\max}$, where
\[
S_{\min} = \max\Big\{ 0,\; \frac{\sigma-\beta}{1-\beta}\Big\}, \;\; 
S_{\max} = \min\Big\{ 1, \; \frac{\sigma}{1-\beta}\Big\}
\]
and $s$ can vary from $s(S_{\max})$ to $s(S_{\min})$.
Now since $A_{\max}>A_{\min}$, $\Delta_S(\alpha)$ is monotone decreasing in $S$.  Similarly, $\Delta'_s(\alpha)$ is monotone decreasing in $s$, so that $\Delta'_{s(S)}(\alpha)$ is monotone increasing in $S$.  This means that we can consider three cases, illustrated by Fig.~\ref{fig:Cases}:
\begin{description}
	\item (a) $\Delta_{S_{\min}}(\alpha)> \Delta'_{s(S_{\min})}(\alpha)$ and $\Delta_{S_{\max}}(\alpha)\geq \Delta'_{s(S_{\max})}(\alpha)$. \smallskip
	\item (b) $\Delta_{S_{\min}}(\alpha)\leq \Delta'_{s(S_{\min})}(\alpha)$ and $\Delta_{S_{\max}}(\alpha)< \Delta'_{s(S_{\max})}(\alpha)$.  \smallskip
	\item (c) $\Delta_{S_{\min}}(\alpha)>\Delta'_{s(S_{\min})}(\alpha)$ and
	$\Delta_{S_{\max}}(\alpha)<\Delta'_{s(S_{\max})}(\alpha)$
\end{description}

\begin{figure}[!h]
	\centering
	\includegraphics[scale=0.16,clip=true,trim=80 10 30 30]{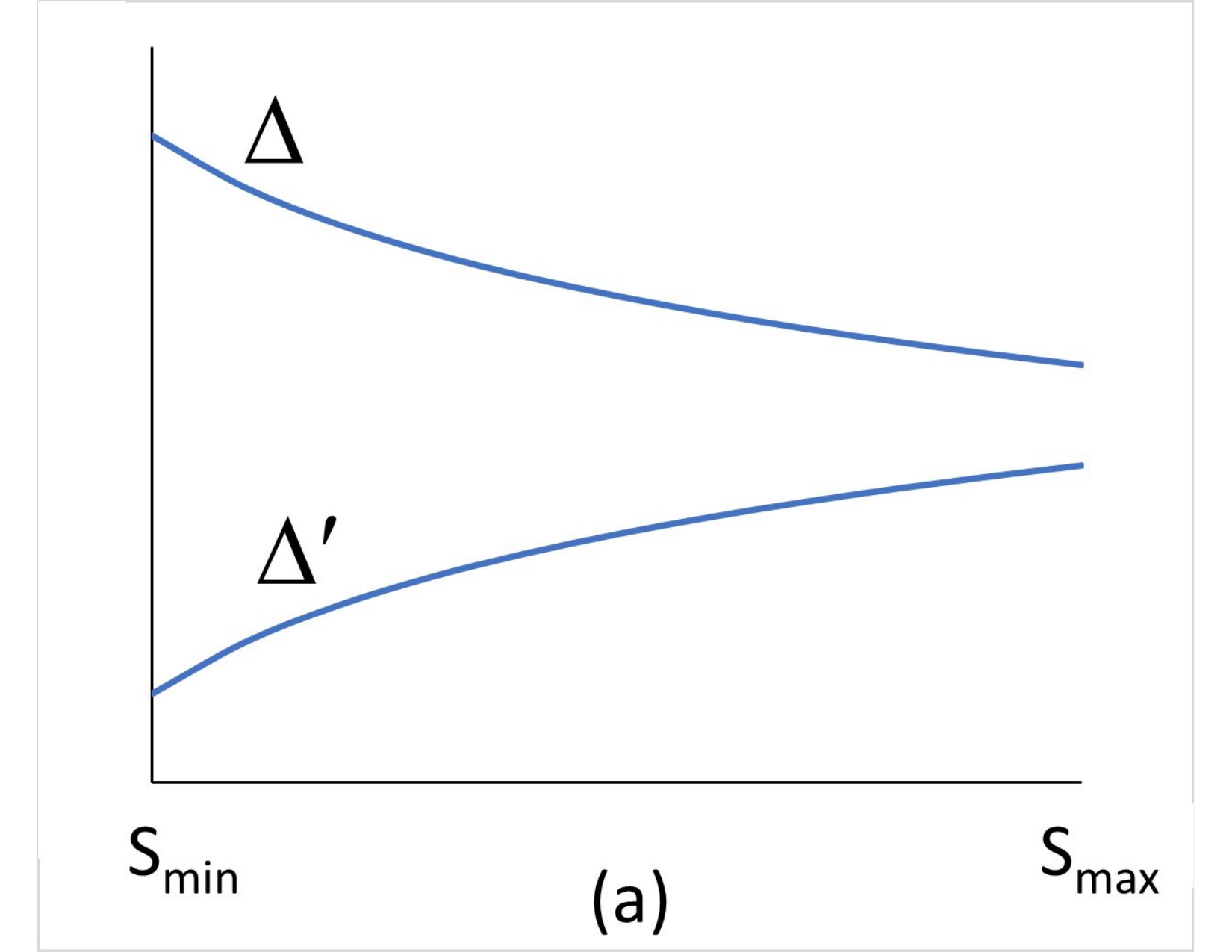} 
	\includegraphics[scale=0.16,clip=true,trim=80 10 30 30]{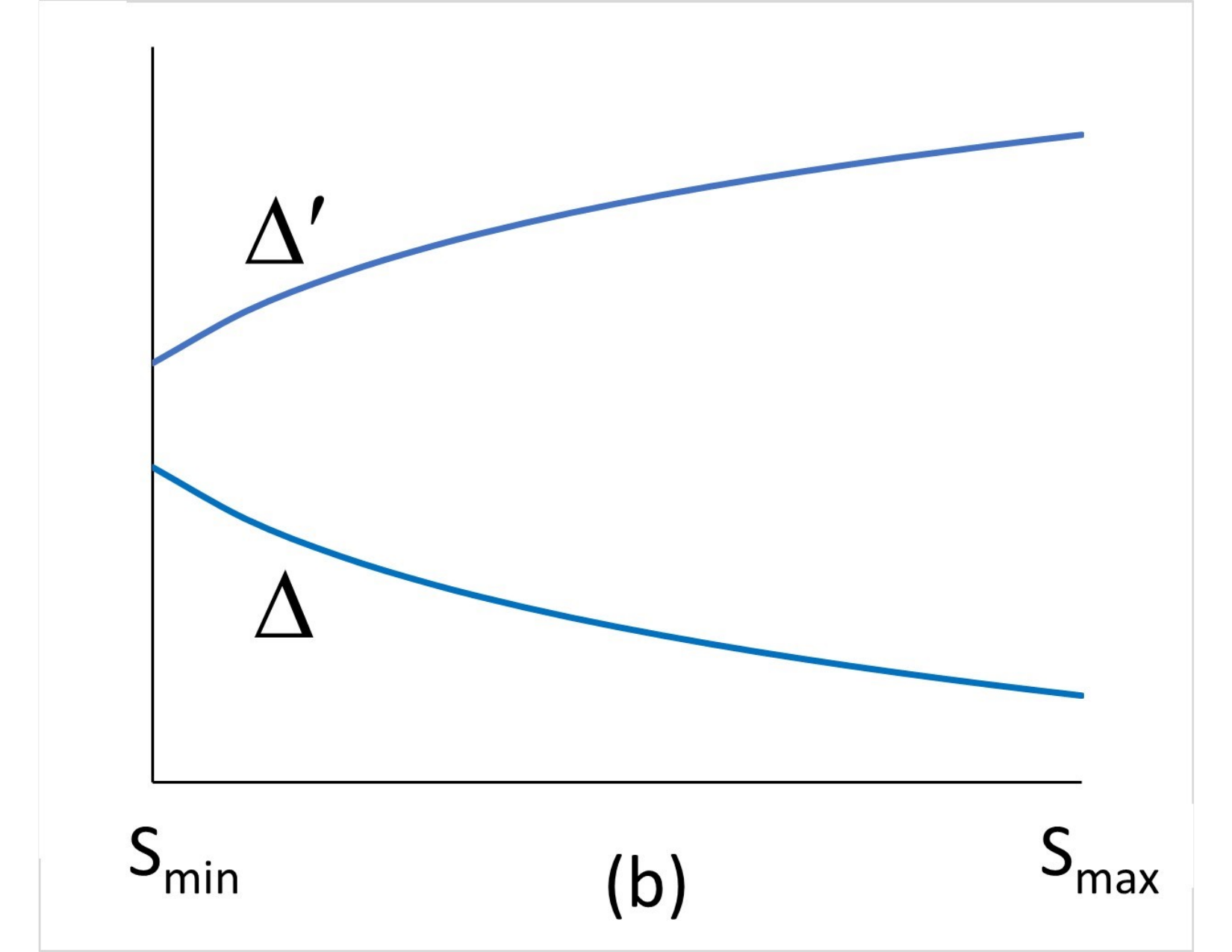} 
	\includegraphics[scale=0.16,clip=true,trim=80 10 30 30]{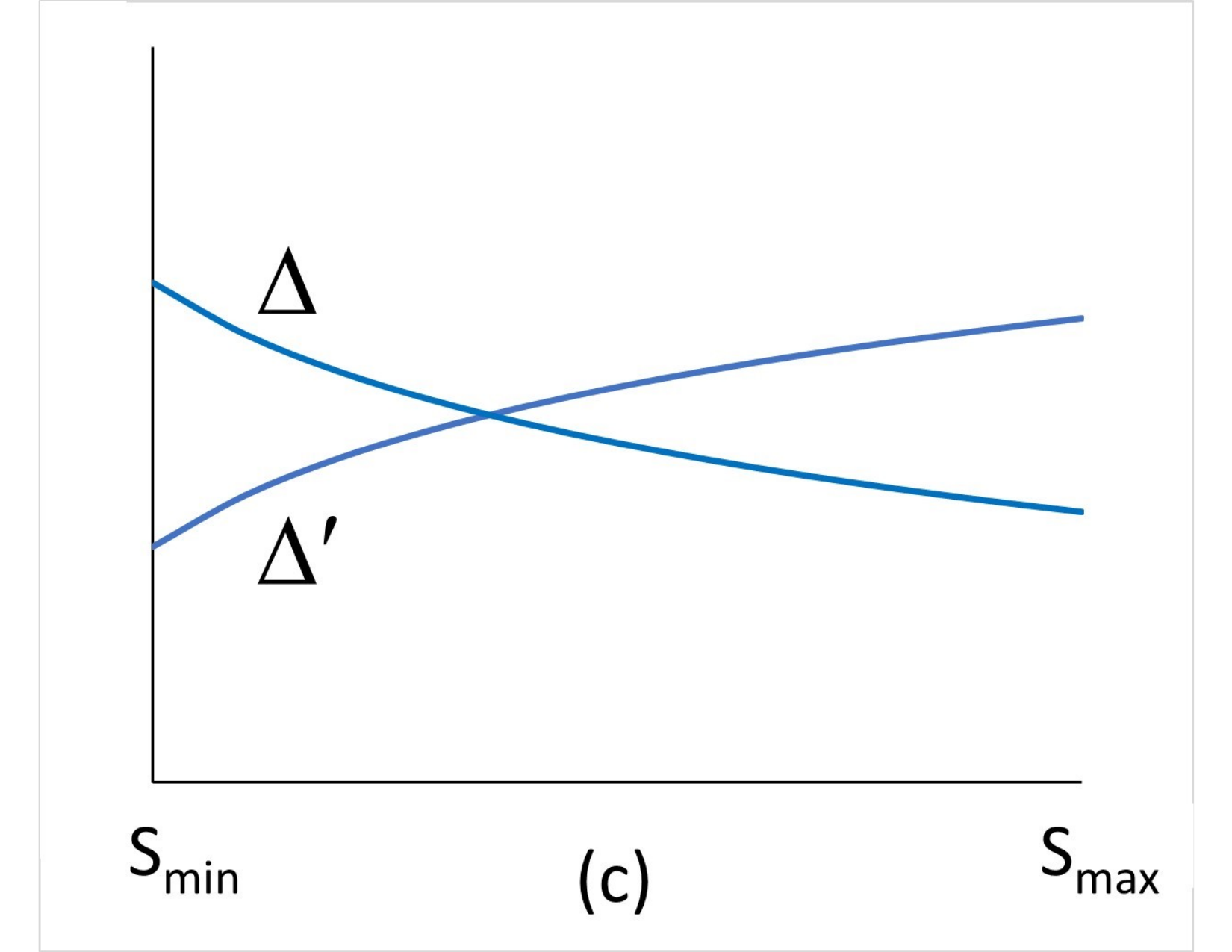} 
	\vspace{-2ex}
	\caption{Cases (a), (b), and (c) in the proof of Theorem \ref{th:alpha}} \label{fig:Cases}
\end{figure}

\begin{theorem} \label{th:alpha}
	Suppose that individuals are selected in decreasing order of their selection benefit, and let $S^*$ and $s^*=s(S^*)$, respectively, be the fraction of the nonprotected and protected groups selected at the end of the selection process.  Then for a sufficiently large population, $S^*$ and $s^*$ achieve alpha fairness if and only if 
	\begin{equation}
	\left\{
	\begin{array}{ll}
	(S^*,s^*) = 
	{\ds
		\Big( \min\Big\{1,\frac{\sigma}{1-\beta}\Big\}, \;\; 
		\frac{\sigma}{\beta}\Big[1 - \min\Big\{1,\frac{1-\beta}{\sigma}\Big\} \Big] \Big),
	} & \mbox{in case (a)} \\[2ex]
    (S^*,s^*) = 
	{\ds
		\Big( \frac{\sigma}{1-\beta}\Big[1 - \min\Big\{1, \frac{\beta}{\sigma}\Big\} \Big], \;\;
		\min\Big\{1, \frac{\sigma}{\beta}\Big\} \Big), 
	} & \mbox{in case (b)} \\[2.5ex]
	\Delta_{S^*}(\alpha)=\Delta'_{s(S^*)}(\alpha) , & \mbox{in case (c)}
	\end{array} 
	\right.
	\label{eq:alpha35}
	\end{equation} 
\end{theorem}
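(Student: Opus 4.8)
The plan is to collapse the theorem to maximizing a single strictly concave function of one real variable over an interval; the three cases then correspond to the maximum being attained at the right endpoint, at the left endpoint, or at an interior stationary point. First I would observe that an alpha-fair policy must, within each group, select a ``top fraction'' of individuals: by Theorem~\ref{th:1} it picks the $m$ individuals of largest welfare differential, and since $\Delta_i(\alpha)$ is strictly increasing in $a_i$, within a group these are exactly the individuals of largest selection benefit. Hence an alpha-fair policy is described by the pair $(S,s)$ of fractions selected in the two groups, subject to the feasibility equation~\eqref{eq:alpha29}; eliminating $s=s(S)$ leaves $S\in[S_{\min},S_{\max}]$ as the only free parameter. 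By \eqref{eq:swf4}--\eqref{eq:swf2}, $W_\alpha$ equals a constant plus $\sum_{i\,:\,x_i=1}\Delta_i(\alpha)$, and in the continuous (large-population) regime this sum for the top-fraction policy $(S,s)$ is
\[\Psi(S,s)=(1-\beta)n\!\int_0^S \Delta_T(\alpha)\,dT+\beta n\!\int_0^s \Delta'_t(\alpha)\,dt .\]
So $(S^*,s^*)$ achieves alpha fairness if and only if $S^*$ maximizes $\Phi(S):=\Psi(S,s(S))$ over $[S_{\min},S_{\max}]$.

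Next I would differentiate, using $s'(S)=-(1-\beta)/\beta$, obtaining $\Phi'(S)=(1-\beta)n\big(\Delta_S(\alpha)-\Delta'_{s(S)}(\alpha)\big)$. Since (as already noted just before the theorem statement) $\Delta_S(\alpha)$ is strictly decreasing and $\Delta'_{s(S)}(\alpha)$ strictly increasing in $S$, the function $g(S):=\Delta_S(\alpha)-\Delta'_{s(S)}(\alpha)$ is strictly decreasing, so $\Phi$ is strictly concave and has a \emph{unique} maximizer on $[S_{\min},S_{\max}]$; this is what makes the characterization an ``if and only if''. Because $g$ is strictly monotone, exactly one of three sign patterns occurs, and they match cases (a), (b), (c): if $g(S_{\max})\ge 0$ (this is case (a), since then automatically $g(S_{\min})>0$) then $\Phi$ is increasing throughout and $S^*=S_{\max}$; if $g(S_{\min})\le 0$ (case (b)) then $S^*=S_{\min}$; and if $g(S_{\min})>0>g(S_{\max})$ (case (c)) then $S^*$ is the unique interior zero of $g$, i.e.\ $\Delta_{S^*}(\alpha)=\Delta'_{s(S^*)}(\alpha)$. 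The borderline $g(S_{\max})=0$ sits in case (a), where the case (c) equation also holds at $S^*=S_{\max}$, so the two descriptions are consistent there.

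It then remains to substitute the endpoints: in case (a), $S^*=S_{\max}=\min\{1,\sigma/(1-\beta)\}$ and $s^*=s(S_{\max})$, which, splitting into the subcases $\sigma\le 1-\beta$ and $\sigma>1-\beta$, simplifies to $s^*=\tfrac{\sigma}{\beta}\big[1-\min\{1,\tfrac{1-\beta}{\sigma}\}\big]$; in case (b), $S^*=S_{\min}=\max\{0,(\sigma-\beta)/(1-\beta)\}$ and $s^*=s(S_{\min})$, giving $S^*=\tfrac{\sigma}{1-\beta}\big[1-\min\{1,\tfrac{\beta}{\sigma}\}\big]$ and $s^*=\min\{1,\sigma/\beta\}$ after the analogous split on $\sigma$ versus $\beta$; this reproduces \eqref{eq:alpha35}. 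The optimization itself is routine once strict concavity is in hand; the main obstacle is the bookkeeping around it — rigorously justifying that an alpha-fair policy is a top-fraction selection within each group (so the one-parameter reduction is legitimate), handling the continuous approximation cleanly, and verifying that all the boundary subcases ($S_{\max}=1$ versus $s^*=0$, and $S_{\min}=0$ versus $s^*=1$) collapse into the uniform $\min/\max$ expressions appearing in \eqref{eq:alpha35}.
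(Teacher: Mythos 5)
Your proposal is correct, and it reaches the paper's conclusion by a genuinely different route. The paper argues directly on the discrete greedy process of Theorem~\ref{th:1}: in cases (a) and (b) it observes that one group dominates in welfare differential throughout $[S_{\min},S_{\max}]$, so selection exhausts that group before touching the other; in case (c) it establishes the marginal equality $\Delta_{S^*}(\alpha)=\Delta'_{s(S^*)}(\alpha)$ by an interleaving argument---the last two individuals selected from each group have differentials whose gap is bounded by $\max\{\Delta_0-\Delta_1,\Delta'_0-\Delta'_1\}$, which vanishes as the population grows. You instead pass to the continuum at the outset, reduce to a one-variable objective $\Phi(S)$ on $[S_{\min},S_{\max}]$, and read off all three cases from the sign of $\Phi'(S)=(1-\beta)n\big(\Delta_S(\alpha)-\Delta'_{s(S)}(\alpha)\big)$, with strict concavity following from the monotonicity facts the paper states just before the theorem. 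The two arguments are equivalent in substance (your first-order condition is exactly the paper's marginal-equality claim), but your version buys something the paper leaves implicit: strict concavity yields a unique maximizer, which cleanly delivers the ``only if'' direction that the paper disposes of with ``the theorem follows.'' The price is that you must justify the Riemann-sum approximation of $\sum_{i:x_i=1}\Delta_i(\alpha)$ by the integral $\Psi(S,s(S))$, which plays the same role as the paper's large-population interleaving step; you flag this as remaining bookkeeping, and your endpoint substitutions reproducing \eqref{eq:alpha35} check out against \eqref{eq:alpha29} and the definitions of $S_{\min}$ and $S_{\max}$.
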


\begin{proof}
	Recall that by Theorem~1, alpha fairness is achieved by selecting individuals in decreasing order of their welfare differential until $S=s(S)$.  We consider the three cases separately.  (a) Because $\Delta_S(\alpha)\geq\Delta'_{s(S)}(\alpha)$ for all $S\in[S_{\min},S_{\max}]$, we select entirely from the nonprotected group until it is exhausted, and then move to the protected group if necessary to select a fraction $\sigma$ of the population. Thus we can set
	\[
	S^*=\min\Big\{S_{\max},\frac{\sigma}{1-\beta}\Big\} = \min\Big\{1,\frac{\sigma}{1-\beta}\Big\}
	\]
	where the first equality is due to the fact that we must have $S^*=\sigma/(1-\beta)$ in order to select a fraction $\sigma$ if the population if $\sigma\leq 1-\beta$, and the second equality is due to the definition of $S_{\max}$.  The expression given in \eqref{eq:alpha35} for $s^*=s(S^*)$ follows directly from the definition of $s(S^*)$, and it is easily checked that $s_{\min}\leq s^*\leq s_{\max}$ using the definitions of $s_{\min}$ and $s_{\max}$.  (b) The argument is very similar to that of the previous case.  (c) In this case, some but not all individuals are selected in both groups.  Let $(S,s)$ be the fraction of the nonprotected and protected individuals selected at any given point in the selection process.  We first show that $\Delta_S(\alpha)=\Delta'_{s}(\alpha)$ for a sufficiently large population.  Let $\Delta_0$ and $\Delta_1$ be the welfare differentials of the last two nonprotected individuals selected, and $\Delta'_0$ and $\Delta'_1$ the differentials of the last two protected individuals selected.  Their selection order is necessarily one of the following:  $(\Delta_0,\Delta'_0,\Delta_1,\Delta'_1)$, $(\Delta_0,\Delta'_0,\Delta'_1,\Delta_1)$, $(\Delta'_0,\Delta_0,\Delta_1,\Delta'_1)$, $(\Delta'_0,\Delta_0,\Delta'_1,\Delta_1)$. In each case, $|\Delta_1-\Delta'_1|$ is at most \mbox{$\max\{\Delta_0-\Delta_1,\Delta'_0-\Delta'_1\}$}.   For a sufficiently large population, $\Delta_0-\Delta_1$ and $\Delta'_0-\Delta'_1$ are arbitrarily small, and so $|\Delta_1-\Delta'_1|$ is arbitrarily small.  Thus we have $\Delta_S(\alpha)=\Delta'_s(\alpha)$ throughout the selection process, and in particular at the end of the process, when $(S,s)=(S^*,s^*)$.  The theorem follows.  $\Box$
\end{proof}

\begin{figure}[!h]
	\centering
	\includegraphics[scale=0.39,clip=true,trim=20 180 10 60]{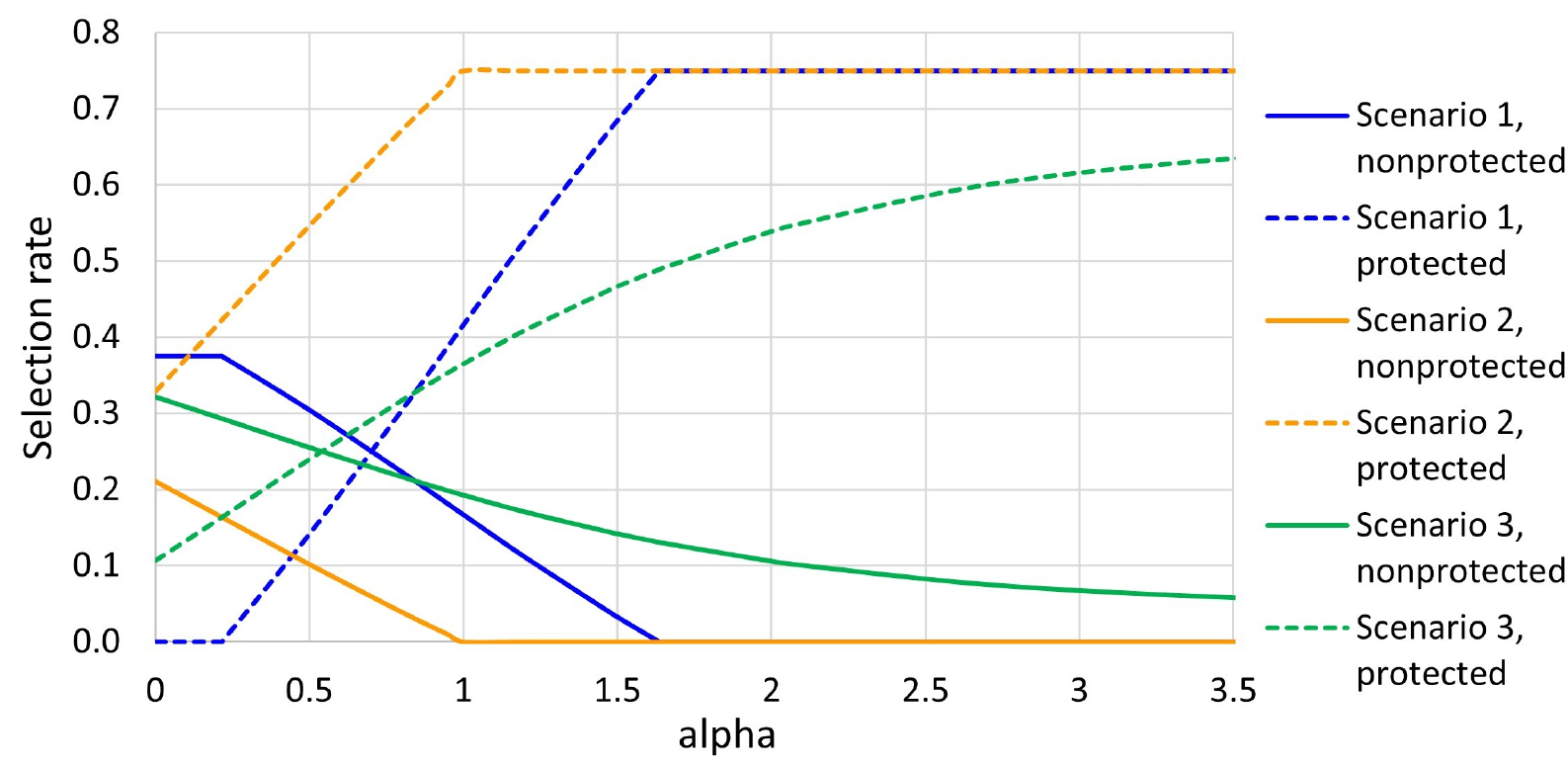} 
	\vspace{-3ex}
	\caption{Alpha fair selection rates, assuming overall selection rate of 0.25} \label{fig:Demo025}
\end{figure}

\begin{figure}[!h]
	\centering
	\includegraphics[scale=0.39,clip=true,trim=20 140 10 120]{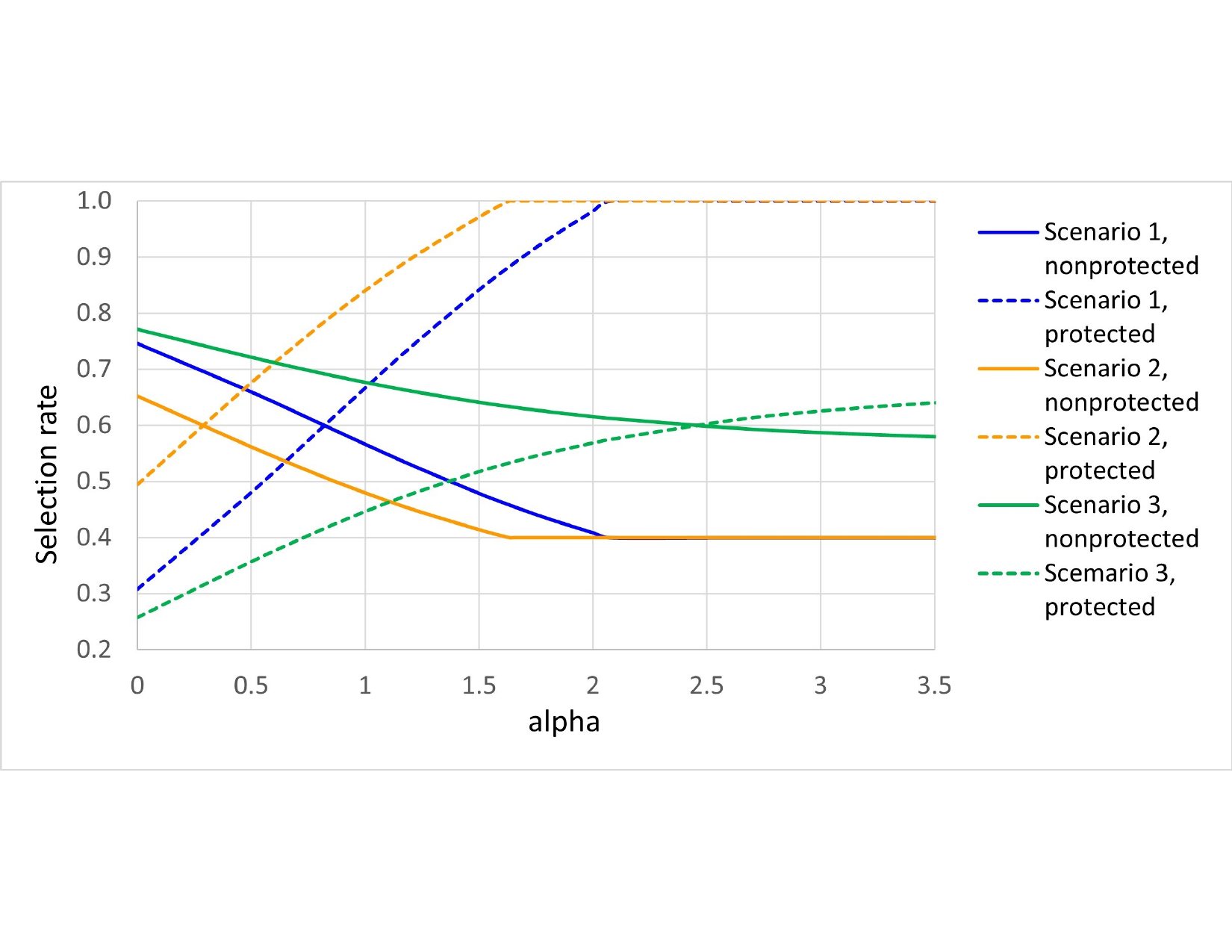} 
	\vspace{-2.5ex}
	\caption{Alpha fair selection rates, assuming overall selection rate of 0.6} \label{fig:Demo06}
\end{figure}

\begin{figure}[!h]
	\centering
	\includegraphics[scale=0.39,clip=true,trim=25 260 5 25]{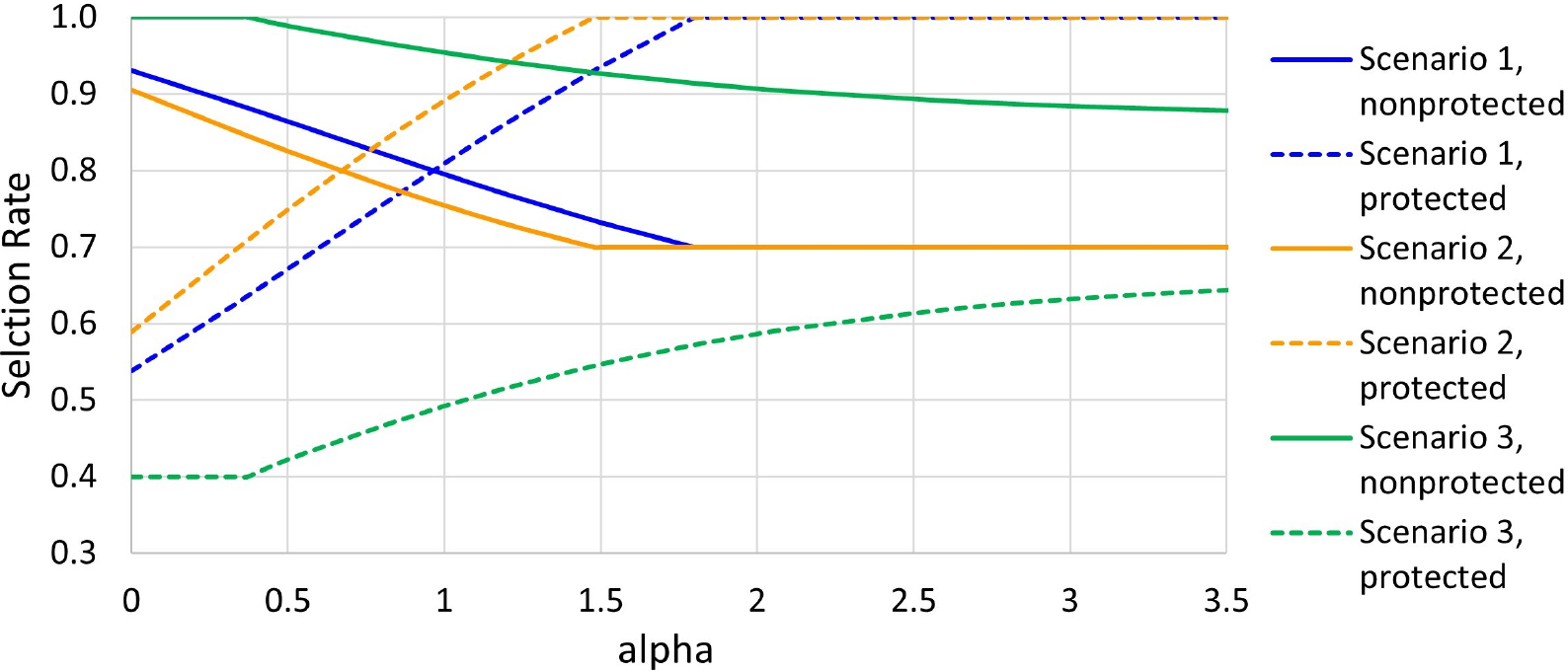} 
	\vspace{-2.5ex}
	\caption{Alpha fair selection rates, assuming overall selection rate of 0.8} \label{fig:Demo08}
\end{figure}

To explore how alpha fair selection policies depend on the utility characteristics of protected and nonprotected groups, we define three scenarios that represent qualitatively different practical situations.
\begin{description}
	\item {\em Scenario 1.}  Protected individuals are somewhat less likely to benefit from being selected, as when those selected for job interviews are less likely to be hired due to less obvious qualifications.  Here, $[A_{\min},A_{\max}]=[0.5,1.5]$ and $[a_{\min},a_{\max}]=[0.2,1]$.  
	\item {\em Scenario 2.} Some protected individuals can benefit more than anyone else from selection, as when talented but economically disadvantaged individuals are admitted to a university.  Here, $[A_{\min},A_{\max}]=[0.5,0.8]$ and $[a_{\min},a_{\max}]=[0.2,1]$.
	\item {\em Scenario 3.}  Significantly many protected individuals are likely to be harmed by selection, as when failure to repay a mortgage results in eviction.  Here, $[A_{\min},A_{\max}]=[0.5,1]$ and $[a_{\min},a_{\max}]=[-0.5,1]$.
\end{description}
Plots of alpha fair selection policies in these scenarios appear in each of Figs.~\ref{fig:Demo025}--\ref{fig:Demo08}.  The three figures respectively assume overall selection rates of $\sigma=0.25,0.6,0.8$.  These selection rates are chosen to be less than, equal to, and greater than a qualification rate of 0.6, which will be assumed for subsequent plots of alpha fair odds ratios and predictive rates.  

The plots show the relationship between alpha fair selection rates $(S,s)$ and the chosen value of $\alpha$.  As expected, larger values of $\alpha$ (indicating a greater emphasis on fairness) result in higher selection rates in the protected group (dashed curves) and lower rates in the nonprotected group (solid curves).  Scenario 1 calls for lower section rates in the protected group than Scenario~2 because of the greater utility cost of achieving fairness in Scenario~1; recall that alpha fairness consider total utility as well as Rawlsian fairness.  Both scenarios require selecting the entire protected group for sufficiently large $\alpha$, except when $\sigma=0.25$, in which case the small number of selections does not exhaust the protected group.  In Scenario~3, by contrast, the protected group's selection rate approaches 2/3 asymptotically, because only 2/3 of the group benefits from being selected in this scenario.




\section{Demographic Parity}

Demographic parity is achieved when $P(D|\neg Z)=P(D|Z)$.  In the above model, this occurs when $s=S=\sigma$.  As it turns out, cases (a) and (b) of Theorem~\ref{th:alpha} do not apply, and we can achieve demographic parity only by choosing a value of $\alpha$ (if one exists) dictated by case (c).      

\begin{theorem}
An alpha fair selection policy for a given $\alpha$ results in demographic parity if and only if there exists a selection rate $S^*$ that satisfies the equation $\Delta_{S^*}(\alpha)=\Delta'_{S^*}(\alpha)$, in which case $(S^*,S^*)$ is such a policy.
\end{theorem}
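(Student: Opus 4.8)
The plan is to reduce the statement to a direct application of Theorem~\ref{th:alpha}. First I would record that, in the sliding-scale model, demographic parity means $S=s$; combined with the feasibility identity \eqref{eq:alpha29}, namely $(1-\beta)S+\beta s=\sigma$, this forces $S=s=\sigma$. So the claim to be proved is: the alpha fair selection rates $(S^*,s^*)$ supplied by Theorem~\ref{th:alpha} equal $(\sigma,\sigma)$ if and only if $\Delta_{\sigma}(\alpha)=\Delta'_{\sigma}(\alpha)$, and in that event the ``$S^*$'' appearing in the statement is simply $\sigma$, the common selection rate, with $\Delta'_{s(\sigma)}(\alpha)=\Delta'_{\sigma}(\alpha)$ because $s(\sigma)=\sigma$.

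For the ``only if'' direction I would rule out cases (a) and (b) of Theorem~\ref{th:alpha}. In case (a), $S^*=\min\{1,\sigma/(1-\beta)\}$; since $0<\beta<1$ and $\sigma=m/n<1$, this is strictly greater than $\sigma$ (it equals $\sigma/(1-\beta)>\sigma$ when $\sigma\le 1-\beta$, and equals $1>\sigma$ otherwise), and then \eqref{eq:alpha29} forces $s^*<\sigma$; hence $(S^*,s^*)\ne(\sigma,\sigma)$. The symmetric computation shows case (b) gives $S^*<\sigma<s^*$. Since the three cases are exhaustive, demographic parity can occur only in case (c), where Theorem~\ref{th:alpha} gives $\Delta_{S^*}(\alpha)=\Delta'_{s(S^*)}(\alpha)$ at the end of the selection process; demographic parity adds $S^*=\sigma$ (so $s(S^*)=\sigma$), which yields $\Delta_{\sigma}(\alpha)=\Delta'_{\sigma}(\alpha)$, i.e.\ $S^*=\sigma$ solves $\Delta_{S^*}(\alpha)=\Delta'_{S^*}(\alpha)$.

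For the ``if'' direction, suppose $\Delta_{\sigma}(\alpha)=\Delta'_{\sigma}(\alpha)$. I would first check $S_{\min}<\sigma<S_{\max}$, which follows from $0<\beta<1$ and $\sigma<1$ by the same elementary inequalities used to eliminate cases (a) and (b). Next I would invoke the monotonicity facts already established just before Theorem~\ref{th:alpha}: $\Delta_S(\alpha)$ is strictly decreasing in $S$ (since $A_{\max}>A_{\min}$) and $\Delta'_{s(S)}(\alpha)$ is strictly increasing in $S$ (since $a_{\max}>a_{\min}$). Together with $\Delta_{\sigma}(\alpha)=\Delta'_{s(\sigma)}(\alpha)$ this gives $\Delta_S(\alpha)>\Delta'_{s(S)}(\alpha)$ for $S<\sigma$ and $\Delta_S(\alpha)<\Delta'_{s(S)}(\alpha)$ for $S>\sigma$; evaluating at $S_{\min}<\sigma$ and $S_{\max}>\sigma$ produces exactly the strict endpoint inequalities defining case (c), and also shows $S=\sigma$ is the unique zero of $\Delta_S(\alpha)-\Delta'_{s(S)}(\alpha)$ on $[S_{\min},S_{\max}]$. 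Applying Theorem~\ref{th:alpha} in case (c) then gives $S^*=\sigma$, hence $s^*=s(\sigma)=\sigma$, so $(S^*,S^*)=(\sigma,\sigma)$ is the alpha fair policy and it achieves demographic parity.

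The routine parts are the two-line inequality checks ($S^*>\sigma$ in case (a), $S^*<\sigma$ in case (b), and $S_{\min}<\sigma<S_{\max}$). The only substantive point is the observation in the ``if'' direction that the single equality $\Delta_{\sigma}(\alpha)=\Delta'_{\sigma}(\alpha)$ suffices to place the alpha fair process squarely in case (c): this is where the \emph{strict} monotonicity of $\Delta_S$ and $\Delta'_{s(S)}$ matters, since case (c) is defined by strict inequalities at both endpoints. One should also keep in mind the standing ``sufficiently large population'' hypothesis inherited from Theorem~\ref{th:alpha}, under which the case-(c) conclusion $\Delta_{S^*}(\alpha)=\Delta'_{s(S^*)}(\alpha)$ holds exactly.
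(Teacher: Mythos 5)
Your proof is correct and follows essentially the same route as the paper's: reduce demographic parity to $S^*=s^*=\sigma$, rule out cases (a) and (b) of Theorem~\ref{th:alpha} by the elementary inequalities $\beta>0$, $\sigma<1$, and conclude from case (c). You are in fact somewhat more careful than the paper in the ``if'' direction, where you use strict monotonicity of $\Delta_S(\alpha)$ and $\Delta'_{s(S)}(\alpha)$ to verify that the hypothesis $\Delta_{\sigma}(\alpha)=\Delta'_{\sigma}(\alpha)$ actually places the problem in case (c); the paper leaves this step implicit.
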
   

\begin{proof}
We first note as follows that neither case (a) nor (b) in Theorem~\ref{th:alpha} applies.  In case (a), demographic parity requires that 
\[
\min\Big\{1, \;\frac{\sigma}{1-\beta}\Big\} = \sigma, \;\; \mbox{or}\;\; \min\Big\{\frac{1}{\sigma}, \frac{1}{1-\beta}\Big\} = 1
\]
This cannot hold, because $\beta>0$ and $\sigma<1$.  Case (b) is similarly ruled out.  We are therefore left with case (c), wherein Theorem~\ref{th:alpha} implies that $S^*=s(S^*)$ if and only if $\Delta_{S^*}(\alpha)=\Delta'_{S^*}(\alpha)$, as claimed. $\Box$
\end{proof}

In Figs.~\ref{fig:Demo025}--\ref{fig:Demo08}, demographic parity is achieved at the value of $\alpha$ where the rising and falling curves for a given scenario intersect.  For example, if the overall section rate is $\sigma=0.6$, parity is achieved in Scenario~1 when $\alpha=0.833$ (Fig.~\ref{fig:Demo06}).  An important lesson in these plots is that a relatively small value of $\alpha$ frequently results in parity.  That is, parity achieves a rather modest degree of fairness when utilities are taken into account.  Indeed, proportional fairness ($\alpha=1$), which is something of an industrial benchmark, typically calls for selecting a significantly greater fraction of the protected group than the nonprotected group.  This is not the case in Scenario~3, however, where parity requires selecting protected individuals who receive minimal benefit and even harm from being selected.  For example, no value of $\alpha$ corresponds to parity when $\sigma=0.8$ (Fig.~\ref{fig:Demo08}) because alpha fairness never endorses harmful choices.

\begin{figure}[!h]
	\centering
	\includegraphics[scale=0.35,clip=true,trim=20 45 10 50]{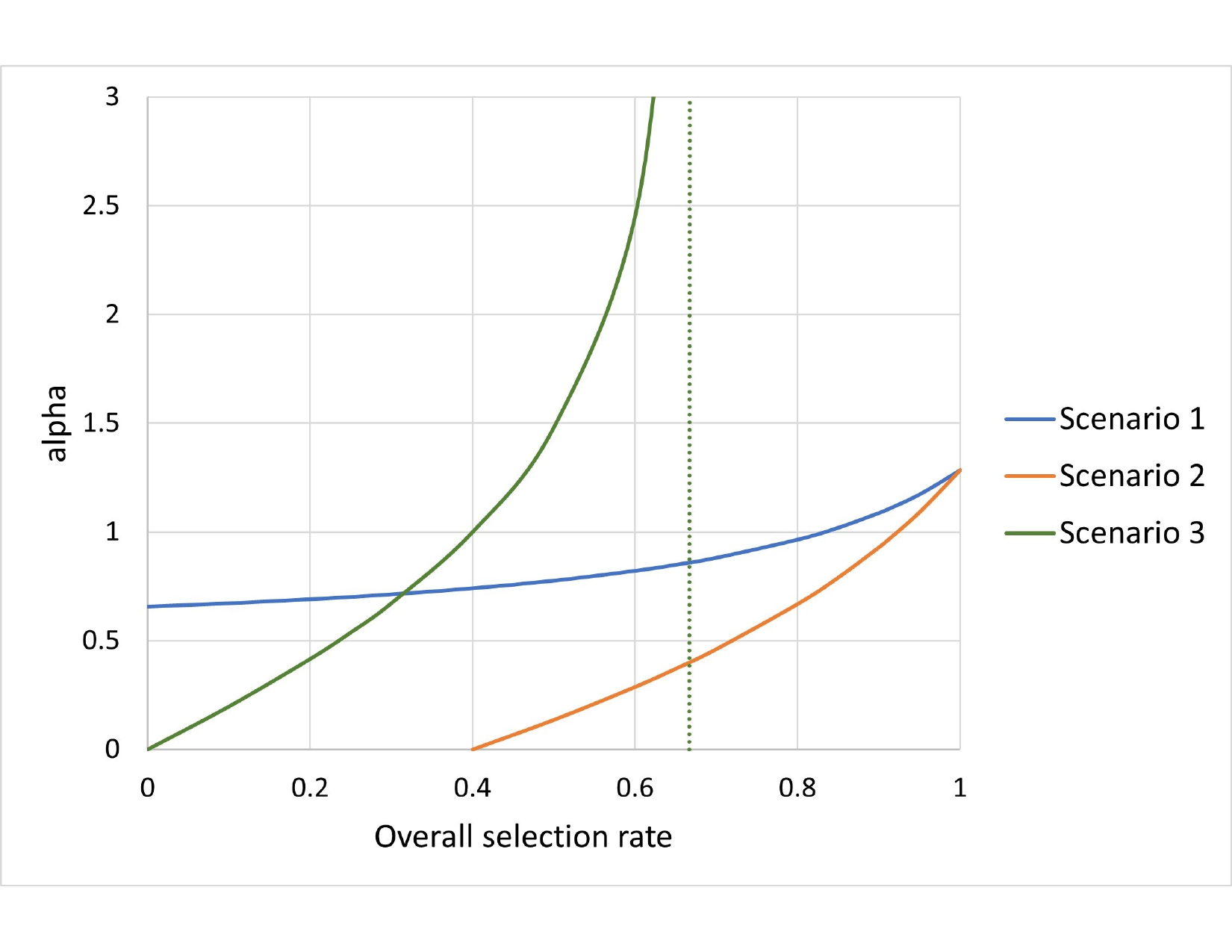} 
	\vspace{-3ex}
	\caption{Values of alpha that achieve demographic parity} \label{fig:Alpha}
\end{figure}

Figure~\ref{fig:Alpha} provides a fuller picture of the relation between the selection rate $\sigma$ and parity-achieving values of $\alpha$.  As $\sigma$ increases, parity corresponds to larger values of $\alpha$ because it becomes necessary to select protected individuals who benefit little from selection. The curves for Scenarios~1 and~2 happen to meet at $\sigma=1$ in this example because $A_{\min}$, $a_{\min}$, $B$, and $b$ are the same in the two scenarios.  We also note that $\alpha\rightarrow\infty$ as $\sigma\rightarrow 2/3$ in Scenario~3 because \mbox{$\sigma> 2/3$} requires selecting individuals who are harmed by selection.

Interestingly, smaller values of $\alpha$ correspond to parity in Scenario~2 than in Scenario~1, despite the fact that rejection can be quite costly to some members of the protected group in Scenario~2 (due to their higher selection benefits).  This occurs because a purely utilitarian assessment already takes this cost into account.  

\section{Equalized Odds}

Equalized odds are achieved when $P(D|Y,Z) = P(D|Y,\neg Z)$.  
To define equalized odds in the above model, we suppose that a fraction $Q$ of nonprotected individuals are qualified, and a fraction $q$ of protected individuals are qualified.  The a fraction $(1-\beta)Q+\beta q$ of the population is qualified.  We also make the reasonable assumption that the selection benefit is greater for qualified individuals than unqualified individuals within a given group.  Thus since $\Delta_S(\alpha)$ and $\Delta'_s(\alpha)$ are monotone decreasing as $S$ and $s$ increase, the qualified individuals in the nonprotected group consist of the fraction $Q$ with the largest welfare differentials.  The odds ratio for the nonprotected group is $S/Q$ when $S\leq Q$ and $1$ when $S>Q$, since in the latter case all the qualified individuals are selected.  Thus the odds ratio is $\min\{1,S/Q\}$ for the nonprotected group, and similarly for the protected group.  This means that we have equalized odds when 
\[
\min\Big\{ \frac{S}{Q},1\Big\} = \min\Big\{\frac{s}{q},1\Big\}
\]
This leads to the following theorem.  It is convenient to define $\rho$ to be the ratio of the fraction selected to the fraction of the population that is qualified, so that
\[
\rho = \frac{\sigma}{(1-\beta)Q + \beta q}
\]

\begin{theorem} \label{th:odds}
An alpha fair selection policy $(S^*,s(S^*))$ for a given $\alpha$ results in equalized odds if and only if one of the following holds:
\begin{align}
& S^* = Q\rho \leq Q \;\;\mbox{and}\;\; s(S^*) = q\rho \leq q \label{eq:odds1} \\
& S^*\geq Q \;\;\mbox{and} \;\; s(S^*)\geq q \label{eq:odds2} 
\end{align}
\end{theorem}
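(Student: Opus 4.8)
The plan is to reduce everything to the equalized-odds characterization already derived just before the statement, namely that an alpha fair policy $(S^*,s^*)$ with $s^*=s(S^*)$ yields equalized odds exactly when $\min\{S^*/Q,1\}=\min\{s^*/q,1\}$, and then to combine this single equation with the population-balance identity $(1-\beta)S^*+\beta s^*=\sigma$ that defines $s(S^*)$ in \eqref{eq:alpha29}. The only auxiliary structural fact worth noting is that $S\mapsto\min\{S/Q,1\}$ is nondecreasing while $S\mapsto\min\{s(S)/q,1\}$ is nonincreasing; this makes the case split below exhaustive and pins down the solution set cleanly. In particular I would not need to unpack the three cases (a)--(c) of Theorem~\ref{th:alpha}: in contrast to the demographic-parity result, cases (a) and (b) are not excluded here, and that is precisely what makes the extra branch \eqref{eq:odds2} possible.

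For the ``if'' direction I would simply substitute. If \eqref{eq:odds1} holds, then $S^*/Q=\rho=s^*/q$ with $\rho\le 1$, so both sides of the equalized-odds equation equal $\rho$. If \eqref{eq:odds2} holds, then $S^*/Q\ge 1$ and $s^*/q\ge 1$, so both sides equal $1$. In either case the policy produces equalized odds.

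For the ``only if'' direction, suppose the alpha fair policy yields equalized odds and let $c$ denote the common value $\min\{S^*/Q,1\}=\min\{s^*/q,1\}$. If $c=1$, then $S^*\ge Q$ and $s^*\ge q$, which is \eqref{eq:odds2}. If $c<1$, then both minima are attained by the untruncated ratios, so $S^*/Q=c=s^*/q$, i.e.\ $S^*=Qc$ and $s^*=qc$; substituting into the balance identity gives $c\big((1-\beta)Q+\beta q\big)=\sigma$, hence $c=\rho$, and therefore $S^*=Q\rho\le Q$ and $s^*=q\rho\le q$, which is \eqref{eq:odds1}. The boundary value $\rho=1$ is exactly where \eqref{eq:odds1} and \eqref{eq:odds2} coincide, so the two branches overlap consistently rather than conflict.

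I do not anticipate a real obstacle; the argument is essentially bookkeeping once the reduction is in place. The one point that needs care is the handling of the $\min$ operators: confirming that a common value equal to $1$ forces the ``$\ge$'' inequalities in both groups (not an equality of truncated ratios), and that a common value strictly below $1$ forbids truncation so that the raw ratios must be equal and can then be solved against the population identity. After that dichotomy is nailed down, the identity $(1-\beta)S^*+\beta s^*=\sigma$ does all the remaining work. $\Box$
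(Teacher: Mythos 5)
Your proof is correct. It rests on the same reduction the paper uses --- equalized odds holds iff $\min\{S^*/Q,1\}=\min\{s(S^*)/q,1\}$, combined with the balance identity \eqref{eq:alpha29} --- but you organize the case analysis differently. The paper splits into four cases according to the sign pattern of $(S^*-Q,\,s(S^*)-q)$ and argues each direction within each case; you instead split on the common value $c$ of the two minima ($c=1$ versus $c<1$), which collapses the analysis to two cases and makes the ``only if'' direction a one-line substitution into $(1-\beta)S^*+\beta s^*=\sigma$ to identify $c=\rho$. Your version is arguably tighter: the dichotomy $c=1$ vs.\ $c<1$ is trivially exhaustive, whereas the paper has to dispose of the mixed cases (b) and (c) separately, and you also make explicit the small point the paper leaves implicit, namely that the two branches \eqref{eq:odds1} and \eqref{eq:odds2} overlap consistently at $\rho=1$. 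The incidental monotonicity remark in your opening paragraph is not actually needed for exhaustiveness of your case split, but it does no harm. Both arguments are sound; the content is the same, and the difference is purely one of bookkeeping economy.
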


\begin{proof}
We consider four mutually exclusive and exhaustive cases:\\[1ex]
\begin{tabular}{l@{\hspace{5ex}}l}
	(a) $S^*\leq Q$ and $s(S^*)\leq q$ & (c) $S^*>Q$ and $s(S^*)\leq q$ \\
	(b) $S^*\leq Q$ and $s(S^*)>q$ & (d) $S^*>Q$ and $s(S^*)>q$
\end{tabular} \\[1ex]
In case (a), equalized odds is equivalent to $S^*(\alpha)/Q = s^*(\alpha)/q$, which implies $S^*=Q\rho$ and $s(S^*)=q\rho$ in \eqref{eq:odds1} due to \eqref{eq:alpha29}.  Conversely, we can see as follows that either of the conditions \eqref{eq:odds1} and \eqref{eq:odds2} implies equalized odds.  Under condition \eqref{eq:odds1}, the values for $S^*$ and $s(S^*)$ in \eqref{eq:odds1} imply $S^*/Q=s(S^*)/q$, and we have equalized odds.  Under condition \eqref{eq:odds2}, both odds ratios are 1, and we again have equalized odds.  In case (b), equalized odds implies $S^*=Q$, in which case condition \eqref{eq:odds2} is satisfied.  Conversely, the case hypothesis is consistent with only condition \eqref{eq:odds2}, in which case both odds ratios are 1 and we have equalized odds.  Case (c) is similar.  In case (d), one of the conditions \eqref{eq:odds1}--\eqref{eq:odds2} is necessarily satisfied (because the latter is satisfied), and we necessarily have equalized odds, because both odds ratios are 1.  $\Box$
\end{proof}

\begin{figure}[!h]
	\centering
	\includegraphics[scale=0.39,clip=true,trim=20 180 10 70]{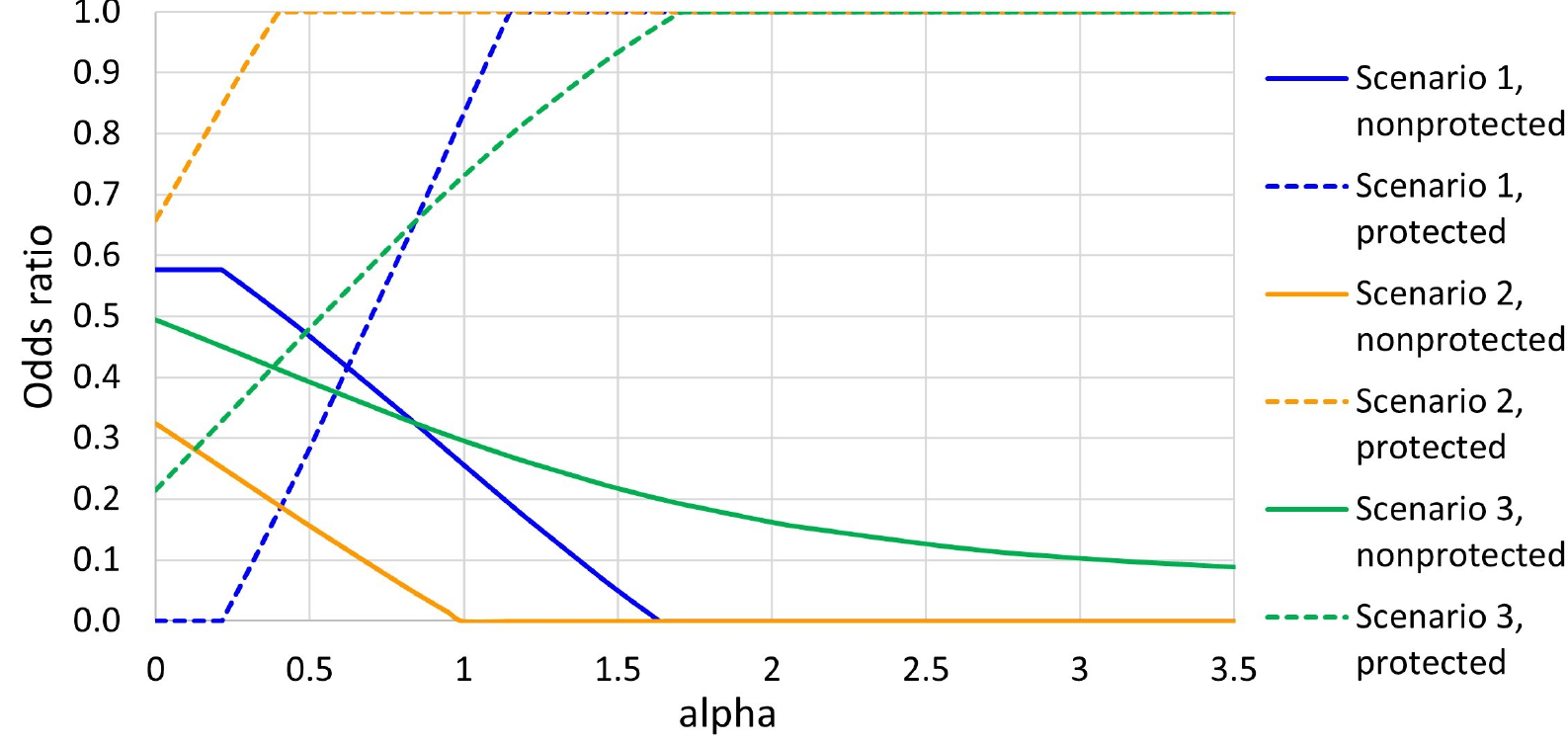} 
	\vspace{-3ex}
	\caption{Alpha fair odds ratios, assuming overall selection rate of 0.25.} \label{fig:Odds025}
\end{figure}

\begin{figure}[!h]
	\centering
	\includegraphics[scale=0.39,clip=true,trim=05 230 10 85]{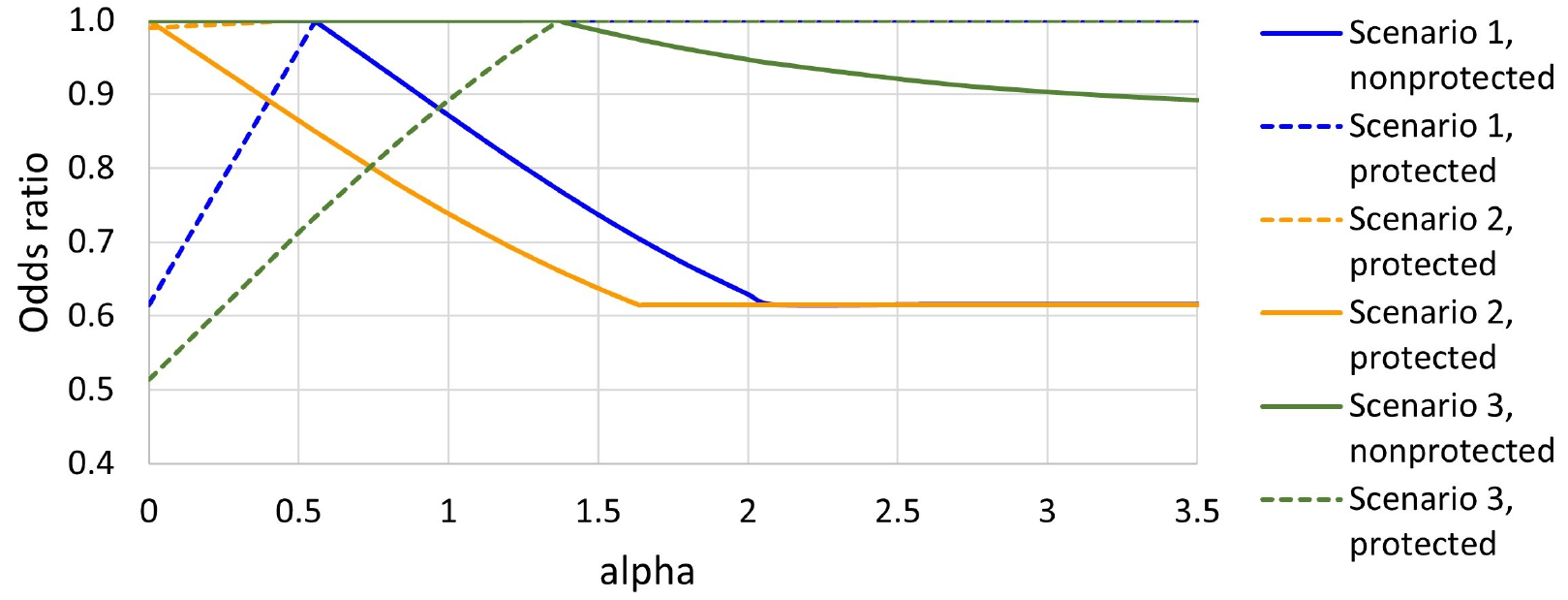}
	\vspace{-3ex}
	\caption{Alpha fair odds ratios, assuming overall selection rate of 0.6.} \label{fig:Odds06}
\end{figure}

To continue the example of the previous section, we suppose that the qualification rates are $(Q,q)=(0.65,0.5)$, so that a fraction $0.6$ of the population is qualified.  Figures~\ref{fig:Odds025} and~\ref{fig:Odds06}, corresponding to $\sigma=0.25$ and $\sigma=0.6$, show alpha fair odds ratios for various $\alpha$.  No plot is given for $\sigma=0.8$ because nearly all of the odds ratios are 1 due to the fact that considerably more individuals are selected than are qualified.  
In the important special case where the number selected is equal to the number qualified (Fig.~\ref{fig:Odds06}), equalized odds is achieved only by an accuracy-maximizing solution: precisely the qualified individuals are selected in both groups.  This rules out any adjustment for fairness.  The odds ratio is perhaps more useful when limited resources compel one to reject significantly many qualified individuals.  In this event, somewhat smaller values of $\alpha$ are typically necessary to achieve equalized odds than demographic parity (Fig.~\ref{fig:Odds025}).  In Scenario~2, a purely utilitarian solution already achieves a higher odds ratio for the protected group, since some of its qualified members derive more utility from selection than anyone in the nonprotected group.

\section{Predictive Rate Parity}

Predictive rate parity is achieved when $P(Y|D,Z) = P(Y|D,\neg Z)$.  
The predictive rate for the nonprotected group is $Q/S$ when $S\geq Q$ and $1$ when $S<Q$, since in the latter case all the selected individuals are qualified.  Thus the predictive rate is $\min\{Q/S,1\}$ for the nonprotected group, and similarly for the protected group.  This means that we have predictive rate parity when 
\[
\min\Big\{ \frac{Q}{S},1\Big\} = \min\Big\{\frac{q}{s},1\Big\}
\]
This leads to the following theorem, whose proof is very similar to the proof of Theorem~\ref{th:odds}.
\begin{theorem} \label{th:pred}
	An alpha fair selection policy $(S^*,s(S^*))$ for a given $\alpha$ results in predictive rate parity if and only if one of the following holds:
\begin{align}
	& S^* = Q\rho \geq Q \;\;\mbox{and}\;\; s(S^*) = q\rho \geq q \label{eq:pred1} \\
	& S^*\leq Q \;\;\mbox{and} \;\; s(S^*)\leq q \label{eq:pred2} 
\end{align}
Note that the expressions for $S^*$ and $s(S^*)$ in \eqref{eq:pred1} are the same as in \eqref{eq:odds1}.
\end{theorem}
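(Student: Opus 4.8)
The plan is to mirror the case analysis in the proof of Theorem~\ref{th:odds}, using the fact that the predictive rate $\min\{Q/S,1\}$ equals $Q/S\le 1$ precisely when $S\ge Q$ — the reverse of the behavior of the odds ratio $\min\{S/Q,1\}$. Accordingly I would partition into four mutually exclusive and exhaustive cases: (a) $S^*\ge Q$ and $s(S^*)\ge q$; (b) $S^*\ge Q$ and $s(S^*)<q$; (c) $S^*<Q$ and $s(S^*)\ge q$; (d) $S^*<Q$ and $s(S^*)<q$. In each case I would argue both directions: that predictive rate parity forces one of \eqref{eq:pred1}--\eqref{eq:pred2}, and that whichever of \eqref{eq:pred1}--\eqref{eq:pred2} is consistent with the case hypotheses implies parity. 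Throughout I would use the population-balance identity \eqref{eq:alpha29}, namely $(1-\beta)S^*+\beta s(S^*)=\sigma$, which every alpha fair policy satisfies.

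First I would treat case (a), the only case in which \eqref{eq:pred1} can be active. Here both predictive rates are at most $1$, so parity is equivalent to $Q/S^*=q/s(S^*)$, i.e.\ $S^*/Q=s(S^*)/q$; calling this common value $u$ and substituting $S^*=Qu$, $s(S^*)=qu$ into \eqref{eq:alpha29} gives $u=\sigma/((1-\beta)Q+\beta q)=\rho$, so $S^*=Q\rho$ and $s(S^*)=q\rho$, and the case hypothesis $S^*\ge Q$ forces $\rho\ge1$, which is exactly \eqref{eq:pred1}. Conversely, \eqref{eq:pred1} gives $Q/S^*=1/\rho=q/s(S^*)\le1$, hence parity; and \eqref{eq:pred2} combined with the case hypothesis forces $S^*=Q$ and $s(S^*)=q$, so both rates equal $1$ and parity again holds.

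Next I would dispatch cases (b) and (c), which are symmetric under interchange of the two groups. In case (b) the protected predictive rate is $1$ (since $s(S^*)<q$), so parity forces the nonprotected rate $Q/S^*$, which is $\le1$ here, to equal $1$, i.e.\ $S^*=Q$; then \eqref{eq:pred2} holds. Conversely, \eqref{eq:pred1} is impossible in case (b) because it requires $s(S^*)=q\rho\ge q$, contradicting $s(S^*)<q$; hence if either condition holds it must be \eqref{eq:pred2}, which with $S^*\ge Q$ forces $S^*=Q$ and thus parity. Case (c) is the mirror image. Finally, in case (d) both predictive rates equal $1$ identically, so parity holds unconditionally; \eqref{eq:pred2} holds (strictly), while \eqref{eq:pred1} is incompatible with $S^*<Q$, so the equivalence is immediate.

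The step most likely to require care is the boundary bookkeeping: one must verify that the non-strict inequalities in \eqref{eq:pred1}--\eqref{eq:pred2} are arranged so that every configuration with $S^*=Q$ or $s(S^*)=q$ is captured by at least one of them, and in particular that $\rho\ge1$ holds exactly when $S^*\ge Q$ in case (a). Apart from this, the argument is a routine transcription of the proof of Theorem~\ref{th:odds} with the roles of numerator and denominator in the ratio exchanged, so I would not expect any genuine difficulty.
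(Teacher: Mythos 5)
Your proposal is correct and follows exactly the route the paper intends: the paper omits the details, stating only that the proof is ``very similar to the proof of Theorem~\ref{th:odds}'' via the analogous four-case split on $S^*\gtrless Q$ and $s(S^*)\gtrless q$, which is precisely the decomposition you carry out. Your case-by-case arguments, including the use of \eqref{eq:alpha29} to pin down the common ratio $\rho$ and the boundary handling at $S^*=Q$ and $s(S^*)=q$, are all sound.
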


Figures~\ref{fig:Pred06} and~\ref{fig:Pred08}, corresponding to $\sigma=0.6$ and $\sigma=0.8$, show alpha fair predictive rates for various $\alpha$.  There is no plot for $\sigma=0.25$, because nearly all of the predictive rates are 1.  
We also note that larger predictive rates correspond to {\em smaller} values of $\alpha$.  

\begin{figure}[!h]
	\centering
	\includegraphics[scale=0.39,clip=true,trim=05 230 10 85]{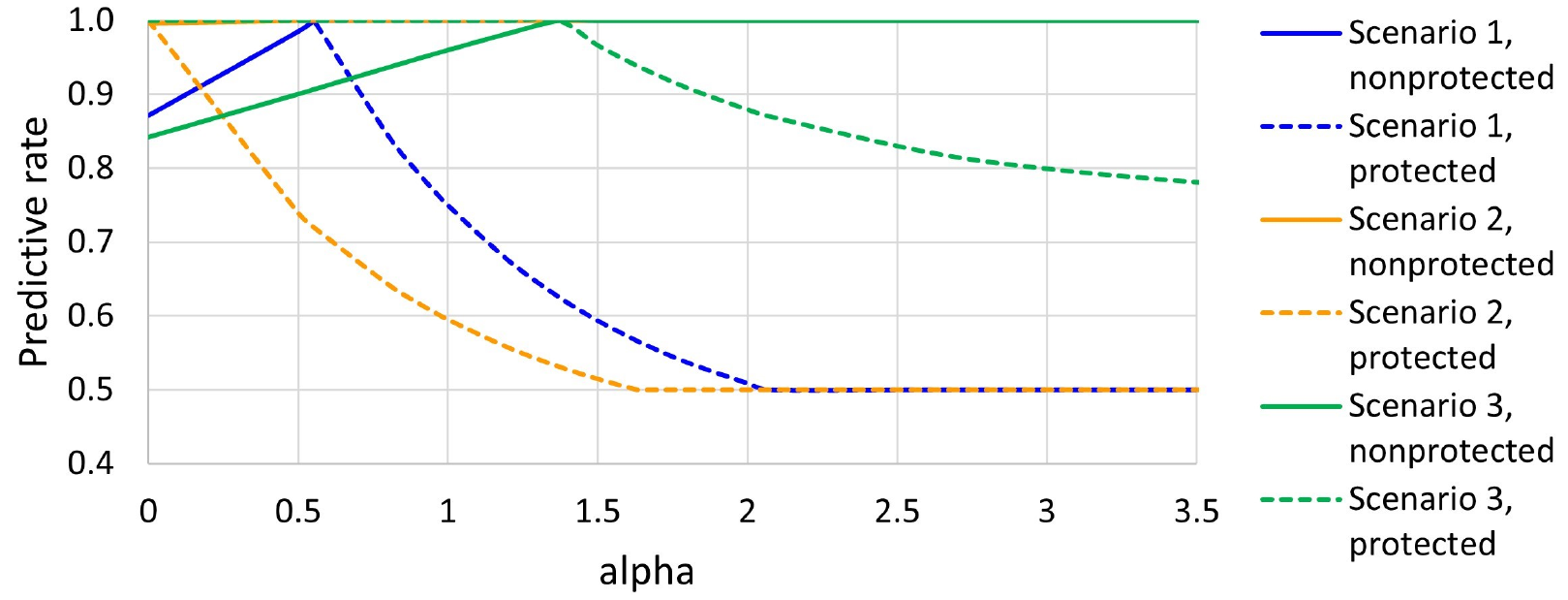}
	\vspace{-3ex}
	\caption{Alpha fair predictive rates, assuming overall selection rate of 0.6.} \label{fig:Pred06}
 \vspace{-6ex}
\end{figure}

\begin{figure}[!h]
	\centering
	\includegraphics[scale=0.39,clip=true,trim=05 215 10 100]{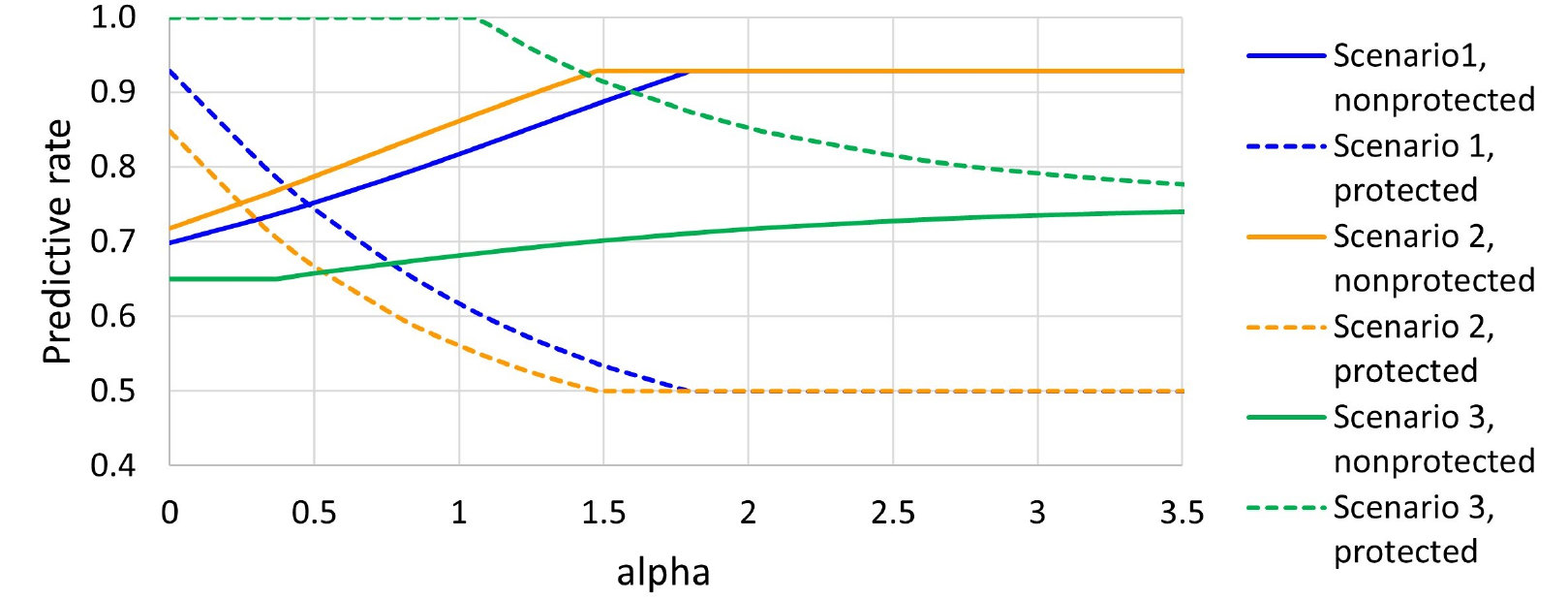}
	\vspace{-3ex}
	\caption{Alpha fair predictive rates, assuming overall selection rate of 0.8.} \label{fig:Pred08}
\end{figure}

\section{Conclusion}
	
Our aim in this paper has been to explore the extent to which social welfare optimization can assess well-known statistical parity metrics as criteria for group fairness in AI.  
Our focus on alpha fairness allows us to address parity questions by appealing to a well-studied concept of just distribution with theoretical underpinning.  We conclude in this section by recalling  the two problems associated with parity metrics and summarizing how they might be addressed from an optimization perspective.

\medskip
1. \emph{Accounting for welfare.} The alpha fairness criterion allows us to take explicit account of welfare implications, for various levels of fairness as indicated by the $\alpha$ parameter.  
We find that for certain values of $\alpha$ and certain group characteristics, an alpha fair selection policy can result in group parity of any of the three types.  Yet it can also call for significant statistical {\em disparity} in order to achieve an acceptable distribution of utilities.  

In particular, the alpha values that result in parity typically lie significantly below that corresponding to proportional fairness ($\alpha=1$)---except when some individuals in the protected group are actually harmed by being selected, in which case larger values of $\alpha$ correspond to parity.  Since proportional fairness is the most widely defended and applied variety of alpha fairness, it is noteworthy that it often requires, not parity, but {\em higher} selection rates for the protected group than for the rest of the population.  
In addition, a lower level of fairness (i.e., a smaller $\alpha$) is necessary to achieve parity when rejection is more costly to members of the protected group than the rest of the population, other things being equal.  This is because even a purely utilitarian accounting already takes this cost into account.  

\medskip
2. \emph{Selecting and justifying parity metrics.}  We derive a number of conclusions regarding the choice of parity metric.  In general, we find that the implications of alpha fairness depend heavily on how many individuals are selected relative to the total number qualified, at least where equalized odds and predictive rate parity are concerned.  

To elaborate on this, we first suppose that the total number selected is the same (or approximately the same) as the total number who are qualified in the population as a whole.  In this case, demographic parity follows the pattern described above, in which relatively small values of $\alpha$ result in parity, except when some protected individuals are harmed by selection.  Yet equalized odds, as well as predictive rate parity, are achieved if and only if the odds ratios and the predictive rates are 1 in both groups.  This corresponds to an accuracy maximizing policy of selecting all and only qualified individuals.
As a result, neither equalized odds nor predictive rate parity reflects any consideration of fairness beyond mere accuracy, and consequently neither is suitable as a fairness criterion in this context.  

We next suppose that the total number selected is significantly less than the number qualified, presumably a common situation due to limited resources.  In this case, equalized odds is generally achieved for smaller values $\alpha$ than are required for demographic parity, considerably smaller when some protected individuals are harmed by selection.  This indicates that demographic parity demands a greater emphasis on fairness than equalized odds.
This is consonant with the fact that equalized odds is sometimes seen as more easily defended, perhaps on grounds of equality of opportunity, than is demographic parity, which may reflect a desire to compensate for historically unjust discrimination.  As for the predictive rate, it is almost always 1 when a significant number of qualified individuals are rejected, since those who make it through the sieve are almost always qualified.  This means that predictive rate parity is likely to be achieved simply due to the high rejection rate and is therefore of little value as a fairness criterion.

Finally, we suppose that the number selected is significantly greater than the number qualified.  Here, the odds ratio loses interest because it is almost always~1.  While predictive rate parity becomes meaningful in this case, decision makers may be reluctant to select more individuals than are qualified.  To the extent this is true, predictive rate parity has limited usefulness.  A possible exception arises in the controversy over parole mentioned earlier.  Predictive rate parity might be defended on the ground that a lower recidivism rate in the protected group (and therefore a higher predictive rate) may reflect stricter parole criteria than for other inmates \cite{AnwFan15}.  Greater fairness may therefore require a {\em reduction} in the predictive rate of the protected group, which we have seen can be achieved by choosing a larger value of $\alpha$.  If this is taken as justifying a practice of paroling more individuals than are qualified (perhaps in order to reduce the predictive rate of protected individuals without tightening the criteria for others), then predictive rate parity could be a suitable criterion.  

In summary, demographic parity can under certain conditions correspond to an alpha fair policy, but it may result in less fairness than desired for the protected group.  Equalized odds can be a useful criterion when fewer individuals are selected than are qualified to be selected, but it corresponds to an even lesser degree of fairness.  Predictive parity is a meaningful fairness measure only in the perhaps rather uncommon situation when decision makers select significantly more individuals than are qualified.
\medskip

The foregoing conclusions regarding equalized odds and predictive rate parity rest on the assumption that, {\em within a given group}, qualified individuals are selected before unqualified individuals.  This assumption might be defended on the ground that (a) qualified individuals are likely to benefit more from being selected, and (b) individuals who benefit more from being selected are selected first in the group.  Assumption (a) might be based on observations that less qualified individuals pose a greater risk of defaulting on a mortgage, failing to secure a job, committing a crime while on parole, and so forth, and therefore have less expected benefit.  As for (b), there is no apparent rationale, based on either expected utility or fairness, for selecting individuals within a group in any other order.  It therefore seems reasonable to suppose (b) is true before assessing fairness.  

We believe these results suggest that there is potential in an  optimization perspective to inform fairness debates in AI.  Further research could explore the parity implications of alternative social welfare functions, such as the Kalai-Smorodinsky and threshold criteria cited earlier.  A particularly interesting research issue is the extent to which achieving fairness in the population as a whole can result in a reasonable degree of parity across all groups.  This would obviate the necessity of selecting which groups to regard as protected, and how to balance their interests.

\bibliographystyle{splncs04}


\end{document}